\documentclass[letterpaper, conference]{ieeeconf}
\usepackage{times}

\IEEEoverridecommandlockouts                              

\usepackage{multicol}
\usepackage[bookmarks=true]{hyperref}
\usepackage{amsmath}
\usepackage{framed}
\usepackage{amssymb}
\usepackage{bm}
\usepackage[version-1-compatibility]{siunitx}

\usepackage{booktabs}
\usepackage{amsthm}
\usepackage{graphicx} 
\usepackage{tikz}

\tikzstyle axes label=[font=\footnotesize, text width=3cm, text centered]

\usepackage{color}
\usepackage{mathrsfs}

\graphicspath{{figures/}}

\theoremstyle{plain} 
 \newtheorem{proposition}{Proposition}
\newtheorem{conjecture}{Conjecture}

\theoremstyle{definition}
\newtheorem{definition2}{Definition}

\theoremstyle{definition}

\newtheorem{remark2}{Implementation remark}

\usepackage[ruled, linesnumbered]{algorithm2e}
\let\oldnl\nl
\newcommand{\nonl}{\renewcommand{\nl}{\let\nl\oldnl}}
\setlength{\algomargin}{2.5em}
\SetKwIF{If}{ElseIf}{Else}{if}{ then}{elif}{else}{}%
\SetKwFor{For}{for}{ do}{}%
\SetKwFor{ForEach}{foreach}{ do}{}%
\SetKwInOut{Input}{Input}%
\SetKwInOut{Output}{Output}%
\AlgoDontDisplayBlockMarkers%
\SetAlgoNoEnd%
\SetAlgoNoLine%
\DontPrintSemicolon

\renewcommand{\vec}[1]{\bm{\mathrm{#1}}}  

\renewcommand{\d}{\mathrm{d}}

\newcommand{\bbR}{\mathbb{R}}


\begin{document}

\title{\LARGE Time-Optimal Path Tracking via Reachability Analysis}

\author{Hung Pham, Quang-Cuong Pham \\
  \thanks{Hung Pham and Quang-Cuong Pham are with Air Traffic
    Management Research Institute (ATMRI) and Singapore Centre for 3D
    Printing (SC3DP), School of Mechanical and Aerospace Engineering,
    Nanyang Technological University, Singapore (email:
    hungpham2511@gmail.com, cuong.pham@normalesup.org)
  }}

\maketitle

\begin{abstract}
  Given a geometric path, the Time-Optimal Path Tracking problem
  consists in finding the control strategy to traverse the path
  time-optimally while regulating tracking errors. A simple yet
  effective approach to this problem is to decompose the controller
  into two components: (i)~a path controller, which modulates the
  parameterization of the desired path in an online manner, yielding a
  reference trajectory; and (ii)~a tracking controller, which takes
  the reference trajectory and outputs joint torques for
  tracking. However, there is one major difficulty: the path controller
  might not find any feasible reference trajectory that can be tracked
  by the tracking controller because of torque bounds.  In turn, this
  results in degraded tracking performances. Here, we propose a new
  path controller that is guaranteed to find feasible reference
  trajectories by accounting for possible future
  perturbations. The main technical tool underlying the proposed
  controller is Reachability Analysis, a new method for analyzing path
  parameterization problems. Simulations show that the proposed
  controller outperforms existing methods.
\end{abstract}

\IEEEpeerreviewmaketitle

\section{Introduction}
\label{sec:intro}

Time-optimal motion planning and control along a predefined path are
fundamental and important problems in robotics, motivated by many
industrial applications, ranging from machining, to cutting, to
welding, to painting, etc.

The \emph{planning} problem is to find the Time-Optimal Path
Parameterization (TOPP) of a path under kinematic and dynamic
bounds. The underlying assumptions are that the robot is perfectly
modeled, no perturbations during execution and no initial tracking
errors. This problem has been extensively studied since the
1980's~\cite{bobrow1985time}, see~\cite{pham2014general,pham2015time}
for recent reviews.

The \emph{control} problem, which looks for a control strategy to
time-optimally track the path while \emph{accounting for model
  inaccuracies, perturbations and initial tracking errors}, is
comparatively less well understood. We refer to this problem as the
Time-Optimal Path Tracking problem, or ``path tracking problem'' in
short.

\subsection{Approaches to Time-Optimal Path Tracking}
\label{sec:related-works}

The first approach to the path tracking problem was proposed by Dahl
and colleagues in the 1990's~\cite{dahl1990torque,
  dahl1994path}. Suppose that we are given a geometric path
$\vec p(s)_{s\in[0,1]}$.  In Dahl's approach, termed Online Scaling
(OS), the path tracking controller is composed of two sub-controllers:
a \emph{path controller} and a \emph{tracking controller}, see
Fig.~\ref{fig:osdiagram}. The path controller generates a \emph{path
  parameterization} $s(t)$ (``scaling'') by controlling the path
acceleration $\ddot s(t)$, from which it returns ``online'' a
reference trajectory $(\vec q_d,\dot{\vec q}_d)$ via the relations
\begin{equation*}
  \begin{aligned}
    \vec q_{d}(t) &= \vec p(s(t)),\;
 \dot{\vec q}_{d}(t) = \vec p'(s(t)) \dot s(t).
  \end{aligned}
\end{equation*}
The tracking controller then takes the reference trajectory
$(\vec q_d,\dot{\vec q}_d)$ and generates the joint
torques $\vec \tau$ to drive the current state to the reference
state. In OS, the tracking controller is usually as a computed-torque
tracking controller with fixed Proportional-Derivative (PD) gains.
Thus, the problem is reduced to designing a path controller that can
regulate the path tracking errors while tracking a reference
parameterization or minimizing execution time.

\tikzstyle block=[font=\normalsize, text width=3cm, text centered]
\tikzstyle signal=[font=\small, text width=3cm, text centered]
\begin{figure}[t]
  \centering
  \begin{tikzpicture}
    \node[anchor=south west,inner sep=0] (image) at (0,0)
    {\includegraphics[]{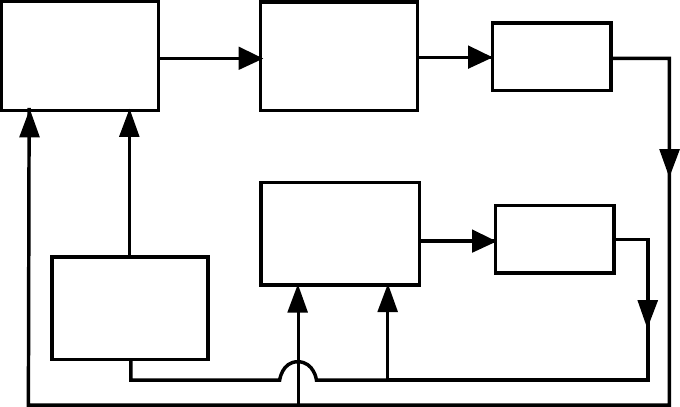}};
    \begin{scope}[x={(image.south east)}, y={(image.north west)}]

 
    \node[block] (robot) at (0.50, 0.86) {Robot};
    \node[block] (pitau) at (0.12, 0.86) {Tracking\\controller};
    \node[block] (piP) at (.50, .44) {Path\\controller};
    \node[block] at (.195, .25) {Predefined\\path};
    \node[signal, anchor=south] at (0.81, 0.795)
         {$\vec q, \dot {\vec q}, \ddot {\vec q}$};
    \node[signal, anchor=south] at (0.3, 0.87)
         {$\vec \tau$};
    \node[signal, anchor=south] at (0.81, 0.35)
         {$s, \dot {s}, \ddot {s}$};
    \node[signal, anchor=north] at (0.265, 0.73) {
      $ \begin{bmatrix}
        \vec q_d \\ \dot {\vec q}_d \\ \ddot {\vec q}_d 
      \end{bmatrix} $
         };

    \node[signal, anchor=north] at (.1,  .73) {
      $ \begin{bmatrix}
        \vec q \\ \dot {\vec q}
      \end{bmatrix} $};

    \end{scope}
  \end{tikzpicture}
  \caption{\label{fig:osdiagram} Block diagram of an Online Scaling
    controller.}
\end{figure}

There have been a number of developments to OS. In~\cite{Arai1994},
the author proposed to use an observer to estimate the online
constraints on the parameterization. In~\cite{Bianco2011}
and~\cite{Gerelli2008}, OS was extended to handle manipulators with
elastic joints or are subject to high-order dynamics such as torque
rate or jerk.

Yet these developments neglect a fundamental problem: there is
\emph{no guarantee} for the path controller to find feasible controls
at execution time. In fact, this issue is recognized in most of the papers
devoted to OS. For example, in the original
paper~\cite{dahl1990torque}, the authors proposed to use the nominal
control if there is no feasible control for the path controller. In a
more recent work~\cite{Bianco2011}, the authors asserted that:
``\emph{since [the path control] bounds are online evaluated [\dots],
  it is not possible to guarantee [\dots that] a feasible solution
  exists [\dots]}''. Yet, employing arbitrary substitute controls when
no feasible control exists will generate large path tracking
errors. For time-optimal path tracking, this issue of \emph{infeasibility} is
far from rare since, by Pontryagin Maximum Principle, time-optimality
is associated with saturating torque bounds at almost every time
instant.

A simpler approach to the path tracking problem can be found
in~\cite{kieffer1997robust}. The authors proposed to consider more
conservative torque bounds at the planning stage, ``reserving''
thereby some torques for tracking during execution. However, this
approach is clearly sub-optimal.

Recently, some authors considered the full optimal control problem,
whose state is $(\vec q, \dot{\vec q}, s, \dot s)$ and control is
$(\vec \tau, \ddot s)$, and applied Nonlinear Model Predictive Control
(NMPC)~\cite{faulwasser2016nonlinear,
  faulwasser2017implementation}. While NMPC can account for hard
constraints on state and control, it has some limitations.  First,
ensuring stability is still
non-trivial~\cite{mayne2000constrained}. For instance,
in~\cite{faulwasser2016nonlinear}, to achieve stability, the path
tracking NMPC controller requires \emph{hand-designed} terminal
sets. Second, the time-optimality objective is challenging since it is
non-convex in the time domain~\cite{verscheure2008practical}.

\subsection{Contribution and organization of the paper}
\label{sec:our-prop-appr}

To guarantee that the path controller will always find feasible
controls requires a certain level of foresight: one needs to take into
account \emph{all possible perturbations} along \emph{the path}. In
this paper, we build on the recent formulation of TOPP by Reachability
Analysis~\cite{Pham2017} to provide such foresight. Specifically, we
compute sets of robust controllable states~\footnote{These are
  parameterization states, which are defined as squared path
  velocities. In Section~\ref{sec:robust-path-accel}, precise
  definitions are given.} that guarantee the existence of feasible
controls for bounded tracking errors. From these sets, a class of path
tracking controllers that have exponential stability and feasiblity
guarantees is identified. The time-optimal controller is then
found straightforwardly.


The rest of the paper is organized as
follows. Section~\ref{sec:path-foll-probl} provides the background on
the path tracking problem and the path tracking
controller. Section~\ref{sec:robust-path-accel} presents the main
contributions. Section~\ref{sec:experimental-results} reports
experimental results, demonstrating the effectiveness of the proposed
approach. Finally, Section~\ref{sec:conclusion} delivers concluding
remarks and sketches directions for future research.

\subsection{Notation}
\label{sec:notational-remark}
We adopt the following conventions.  Vectors are denoted by bold
letters: $\vec x$. The $i$-th component of a vector is denoted using
subscript $i$: $x_{i}$. A vector quantity at stage $j$ is denoted by
bold letters with subscript $j$: $\vec x_{j}$, its $i$-th component is
denoted by adding a second subscript $i$: $x_{ji}$.  Define function
$\phi(s, s^{d}; \vec p):=[\vec p(s)^{\top}, \vec p'(s)^{\top}
s^{d}]^{\top}$, argument $\vec p$ will be neglected if clear from
context. If $\vec x, \vec y$ are two vectors, $(\vec x,\vec y)$ denote
the concatenated vector $(\vec x^{\top},\vec y^{\top})^{\top}$.
Values of differential quantities, such as $\dot {\vec q}$,
have superscript $d$: $\vec q_{0}^{d}$.

\section{Background: Path Tracking problem and controllers}
\label{sec:path-foll-probl}

\subsection{Path Tracking problem}
\label{sec:path-tracking}

Path tracking is the problem of designing a controller to make the
robot's joint positions follow a path parameterization of a predefined
path. The path parameterization is not fixed but is generated by the
controller in an online manner.

Specifically, we consider a $n$-dof manipulator with the dynamic
equation
\begin{equation}
  \label{eq:rigid}
  \vec M(\vec q) \ddot{\vec q} + \dot{\vec q}^{\top}\vec C(\vec q) \dot {\vec q}+ \vec h(\vec q)= \vec {\tau},
\end{equation}
where $\vec q \in \bbR^{n}$ and $\vec \tau \in \bbR^{n}$ denote the
vectors of joint positions and joint torques; $\vec M, \vec C, \vec h$
are appropriate functions. The joint torques are bounded:
\begin{equation}
  \label{eq:t-bound}
  \vec \tau_{\min} \leq \vec \tau \leq \vec \tau_{\max}.
\end{equation}

\emph{A geometric path} is a twice-differentiable function
$\vec p(s)_{s\in [0, 1]}\in \bbR^{n}$. A \emph{path parameterization}
is a twice-differentiable non-decreasing function
$s(t)_{t\in [0, T]} \in [0, 1]$. Path parameterizations are also
subject to terminal velocity constraints of the form
$\dot s(T) \in \mathbb{I}_{\mathrm{end}}$, where $\mathbb{I}_{\mathrm{end}}$ is
called the terminal set. To generate the parameterization, one
directly controls the path acceleration. Let $u$ denote the path
parameterization control: $\ddot s = u$.

The state-space equation of the \emph{coupled system} consisting of
the manipulator and the path parameterization reads
\begin{equation}
  \label{eq:ss}
 \frac{\d}{\d t}
 \begin{bmatrix}
   \vec q \\ \dot{\vec {q}} \\ s \\ \dot s
 \end{bmatrix} =
 \begin{bmatrix}
   \dot {\vec q} \\
   \vec M(\vec q)^{-1}(\vec \tau - \dot{\vec q}^{\top}\vec C(\vec q) \dot {\vec q}- \vec h(\vec q))  \\
   \dot s \\
   u
 \end{bmatrix}.
\end{equation}
Let $\vec y$ denote the state of the coupled system
$[\vec q, \dot {\vec q}, s, \dot s]$, Eq.~(\ref{eq:ss}) can be written
concisely as
\begin{equation*}
 \dot {\vec y} = f(\vec y) + g(\vec y)
 \begin{bmatrix}
   \vec \tau \\ u
 \end{bmatrix}.
\end{equation*}

Consider a control law $[\vec \tau, u] = \alpha(\vec y)$ that always
satisfies the torque bounds, one obtains the autonomous dynamics
\begin{equation}
  \label{eq:auto-ss}
  \dot {\vec y} = f(\vec y) + g(\vec y) \alpha(\vec y) = \hat{f}(\vec y).
\end{equation}
We say that $\vec y(t)_{t\in [0, T]}\in \mathbb{R}^{2n + 2}$ is a
\emph{solution} of Eq.~\eqref{eq:auto-ss} if
\begin{equation*}
 \dot {\vec y}(t) = \hat{f} (\vec y(t)), \; y_{2n + 1}(t) \in [0, 1], \; \forall t \in [0, T].
\end{equation*}
It is a \emph{feasible solution} if additionally,
\begin{align*}
  &y_{2n + 2}(t) \geq 0, \; \forall t \in [0, T], \\
  &y_{2n+1}(T) = 1, y_{2n+2}(T) \in \mathbb{I}_{\mathrm{end}}.
\end{align*}
It is a solution \emph{with initial value} $\vec y_{0}$ if
$\vec y(0) = \vec y_{0}$.

The coupled system is \emph{stable} at
$(s_{0}, s_{0}^{d}) \in [0, 1]\times [0, \infty]$ if for any $R>0$,
there exist $r>0$ such that if
$\|(\vec q_{0}, \vec q_{0}^{d}) - \phi(s_{0}, s_{0}^{d}; \vec p)\|_{2}\leq
r$, the solution $\vec y=:(\vec q, \dot {\vec q}, s, \dot s)$ with
initial value $(\vec q_{0}, \vec q_{0}^{d}, s_{0}, s_{0}^{d})$ exists
and
\[
\|(\vec q(t), \dot {\vec q}(t)) - \phi(s(t), \dot s(t))\|_{2} < R, \; \forall t \in [0, T].
\]
The coupled system is \emph{exponentially stable} at
$(s_{0}, s_{0}^{d}) \in [0, 1]\times [0, \infty]$ if it is stable and
there exist $r_{e} > 0$ such that if
$\|(\vec q_{0}, \vec q_{0}^{d}) - \phi(s_{0}, s_{0}^{d}; \vec p)\|_{2}\leq
r_{e}$, the solution $\vec y=(\vec q, \dot {\vec q}, s, \dot s)$ with
initial value $(\vec q_{0}, \vec q_{0}^{d}, s_{0}, s_{0}^{d})$ exists
and satisfies
\[
  \|(\vec q(t), \dot {\vec q}(t)) - \phi(s(t), \dot s(t))\|_{2} < K e^{-\lambda t}, 0 \leq t \leq T,
\]
for some positive real numbers $K, \lambda$.

\subsection{Path Tracking controllers}
\label{sec:two-degree-freedom}
A path tracking controller consists of a path controller, which
controls the path acceleration to generate a desired joint trajectory
$\vec q_{d}(t):=\vec p(s(t))$, and a tracking controller that controls
the joint torques to track the desired joint trajectory.

A common control objective is to track a predefined reference path
parameterization. Here we consider the time-optimal objective, which
is to traverse the path as fast as possible.

Similar to the paper~\cite{dahl1990torque} and subsequent
developments~\cite{dahl1994path,Bianco2011}, we employ the
computed-torque trajectory tracking scheme for the tracking
controller. This scheme implements the following control law
\begin{equation}
  \label{eq:torque}
  \begin{aligned}
  \vec \tau= &\vec M(\vec q)[\ddot {\vec q}_{d} + \vec K_{p} \vec e
  + \vec K_{d} \dot{\vec e} ]  \\
  &+ \dot{\vec q}^{\top}\vec C(\vec q) \dot {\vec q}+ \vec h(\vec q),
  \end{aligned}
\end{equation}
where $\vec e$ denote the joint positions error vector, defined as
$\vec e := \vec q_{d}(t) - \vec q(t)$, $\vec K_{p}$ and $\vec K_{d}$
are the PD gain matrices.  The vector $(\vec e, \dot{\vec e})$ is
called the tracking error.  The first and second time derivatives of
the desired joint trajectory, which are used in~\eqref{eq:torque}, are
given by
\begin{equation*}
  \begin{aligned}
 \dot{\vec q}_{d}(t) &= \vec p'(s(t)) \dot s(t),\\ \ddot {\vec q}_{d}(t) &= 
\vec p'(s(t)) \ddot s(t) + \vec p''(s(t)) \dot s(t)^{2}.
  \end{aligned}
\end{equation*}

Rearranging Eq.~\eqref{eq:torque}, one obtains a formula for joint
torques:
\begin{equation}
  \label{eq:cnst-origin-2}
  \vec \tau= \hat{\vec a}(\vec y)\ddot s + \hat{\vec b}(\vec y)\dot s^{2} + \hat{\vec c}(\vec y),
\end{equation}
where 
\[
  \begin{aligned}
    \hat{\vec a} (\vec y) &= \vec M(\vec p(s) + \vec e)\vec p'(s), \\
    \hat{\vec b} (\vec y)&= \vec M(\vec p(s) + \vec e)\vec p''(s) +
    \vec p'(s)^{\top} \vec C(\vec p(s) + \vec e) \vec p'(s), \\
    \hat{\vec c} (\vec y)&= \vec M(\vec p(s) + \vec e) [\vec K_{p}
    \vec e + \vec K_{d}\dot{\vec e}] \\ & + 2 \dot{\vec e}^{\top}\vec
    C(\vec p(s) + \vec e)\vec p'(s)\dot s + \vec h(\vec p(s) + \vec
    e).
  \end{aligned}
\]

We observe that if the tracking error is zero, the
coefficients $\hat{\vec a}, \hat{\vec b}, \hat{\vec c}$ depend only on
the path position $s$ and not on the path velocity $\dot s$. Indeed in
this case, the coefficients reduce to
\[
  \begin{aligned}
    \vec a (s) &= \vec M(\vec p(s))\vec p'(s), \\
    \vec b (s)&= \vec M(\vec p(s))\vec p''(s) +
    \vec p'(s)^{\top} \vec C(\vec p(s)) \vec p'(s), \\
    \vec c (s)&= \vec h(\vec p(s)).
  \end{aligned}
\]
We call $\vec a, \vec b, \vec c$ the nominal coefficients.
Additionally, by inspection, we see that the coefficients
$\hat{\vec a}, \hat{\vec b}, \hat{\vec c}$ are continuous with respect
to the tracking error $\vec e, \dot {\vec e}$.

It follows from the definition of continuous functions that for any
pair $(s, s^{d})\in [0, 1]\times [0, \infty]$ and any positive number
$R$ there exists $r > 0$ such that for all $i\in[1\dots n]$
\begin{equation}
  \label{eq:continuity}
  \left\|
  \begin{bmatrix}
  \vec e\\ \dot{\vec e}
  \end{bmatrix}
  \right\|_{2} < r \implies
  \left\|
  \begin{bmatrix}
    \hat{a}_{i}(\vec q, \dot {\vec q}, s, s^{d}) - a_{i}(s)\\
    \hat{b}_{i}(\vec q, \dot {\vec q}, s, s^{d}) - b_{i}(s)\\
    \hat{c}_{i}(\vec q, \dot {\vec q}, s, s^{d}) - c_{i}(s)\\
  \end{bmatrix} \right\|_{2}
  < R.
\end{equation}
This result implies that for tracking errors with sufficiently small
magnitude, the coefficients $\hat{\vec a}, \hat{\vec b}, \hat{\vec c}$
vary around the nominal coefficients $\vec a, \vec b, \vec
c$. Furthermore, since the path velocity can always be assumed to be
bounded, we can strengthen this result: there exists $\bar {r} > 0$
such that Eq.~\eqref{eq:continuity} holds for any pair $(s, s^{d})$.

\subsection{Difficulties with designing path controllers}
\label{sec:trajectory-generator}

The fundamental difficulty with designing path controllers is that the
coefficients $\hat{\vec a}, \hat{\vec b}, \hat{\vec c}$ are only
available online. Hence, it is non-trivial to avoid situations in
which there is no path acceleration that satisfies
Eq.~\eqref{eq:cnst-origin-2}. A consequence of this infeasibility is
that exponential convergence of the actual joint trajectory to the
desired joint trajectory is not guaranteed because of saturating
torque bounds.  This difficulty also renders reaching the terminal set
challenging.

\section{Solving the Time-Optimal Path Tracking problem}
\label{sec:robust-path-accel}

\subsection{Exponential stability with robust feasible control laws}
\label{sec:robust-path-param}

In the last section, it was shown that if tracking errors have small
magnitude, the coefficients of Eq.~\eqref{eq:cnst-origin-2} vary around
the nominal coefficients. Motivating by this observation, we introduce
the notion of \emph{robust feasible} control laws. Note that in this
section, \emph{control laws} refer to control laws for selecting path
parameterization controls $u$, not the coupled control
$[\vec \tau, u]$.

Specifically, a \emph{control law} is a function that computes the
path acceleration from the current path position, the current path
velocity and the coefficients:
\[
  u(t) = \pi (s(t), \dot s(t), \hat{\vec a}(t), \hat{\vec b}(t),\hat{\vec c}(t)).
\]
The control law $\pi$ is \emph{robust feasible} at the pair
$(s_{0}, s_{0}^{d})$ if there exists $R>0$ such that for any set of
coefficients
\begin{align*}
  &\hat{\vec a}(t) := \vec a(s(t)) + \vec \Delta_{a}(t),
  \hat{\vec b}(t) := \vec b(s(t)) + \vec \Delta_{b}(t), \\
  &\hat{\vec c}(t) := \vec b(s(t)) + \vec \Delta_{c}(t),
\end{align*}
where the perturbations
$\vec \Delta_{a}, \vec \Delta_{b}, \vec \Delta_{c}$ are arbitrary
continuous functions satisfying
\begin{equation}
  \label{eq:rbp-constraint}
  \|(\Delta_{a, i}(t), \Delta_{b, i}(t), \Delta_{c, i}(t))\|_{2}  < R, \forall t, i \in [1,\dots,n],
\end{equation}
the generated parameterization $s(t)$ is feasible, that is
\begin{align}
  &s(0) = s_{0}, \dot s(0) = s_{0}^{d},\\
  &\exists T, s(T) = 1, \dot s(t) \in \mathbb{I}_{\mathrm{end}},\\
  &\forall t \in [0, T], \dot s(t) \geq 0\\
  &\forall t \in [0, T], \text{Eq.~\eqref{eq:rbp-constraint} holds.}
\end{align}

The following result shows that a robust feasible control law ensures
exponentially stable path tracking.
\begin{proposition}
  \label{prop:1}
  Consider a path tracking controller with (path acceleration) control
  law $\pi$. If $\pi$ is robust feasible at $(s_{0}, s_{0}^{d})$ then
  the coupled system is exponentially stable at $(s_{0}, s_{0}^{d})$.
\end{proposition}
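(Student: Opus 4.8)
The plan is to combine three ingredients: (i)~the strengthened continuity property stated at the end of Section~\ref{sec:two-degree-freedom}, which turns a bound on the tracking error $(\vec e,\dot{\vec e})$ into a bound on the coefficient perturbations $(\vec \Delta_{a},\vec \Delta_{b},\vec \Delta_{c})$, uniformly in $(s,s^{d})$; (ii)~robust feasibility of $\pi$, which converts such a perturbation bound into the guarantee that the torque produced by Eq.~\eqref{eq:cnst-origin-2} stays within the bounds~\eqref{eq:t-bound} and that $s(t)$ reaches $1$ with admissible terminal velocity; and (iii)~the elementary fact that, as long as the torque bounds are inactive, the computed-torque law~\eqref{eq:torque} forces the tracking error to obey the \emph{linear time-invariant} dynamics $\ddot{\vec e}+\vec K_{d}\dot{\vec e}+\vec K_{p}\vec e=\vec 0$, which is exponentially stable for the usual positive-definite choice of $\vec K_{p},\vec K_{d}$.

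First I would fix the radius $R>0$ furnished by robust feasibility of $\pi$ at $(s_{0},s_{0}^{d})$, and invoke the strengthened form of Eq.~\eqref{eq:continuity} to obtain $\bar r>0$ such that $\|(\vec e,\dot{\vec e})\|_{2}<\bar r$ implies $\|(\Delta_{a,i},\Delta_{b,i},\Delta_{c,i})\|_{2}<R$ for every joint $i$, hence Eq.~\eqref{eq:rbp-constraint}. Next I would derive the error dynamics: substituting~\eqref{eq:torque} into~\eqref{eq:rigid} and using $\vec e=\vec q_{d}-\vec q$ and invertibility of $\vec M$ gives $\ddot{\vec e}+\vec K_{d}\dot{\vec e}+\vec K_{p}\vec e=\vec 0$ \emph{on any interval on which the torque computed by~\eqref{eq:cnst-origin-2} satisfies~\eqref{eq:t-bound}} (so that the commanded torque equals the applied one). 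Rewriting this second-order system in first-order form and applying the standard estimate for a Hurwitz linear system yields constants $K\ge 1$, $\lambda>0$ with $\|(\vec e(t),\dot{\vec e}(t))\|_{2}\le K e^{-\lambda t}\|(\vec e(0),\dot{\vec e}(0))\|_{2}$ wherever that clean dynamics holds. Finally, since $\vec e(0)=\vec p(s_{0})-\vec q_{0}$ and $\dot{\vec e}(0)=\vec p'(s_{0})s_{0}^{d}-\vec q_{0}^{d}$, we have $(\vec e(0),\dot{\vec e}(0))=\phi(s_{0},s_{0}^{d})-(\vec q_{0},\vec q_{0}^{d})$, so $\|(\vec e(0),\dot{\vec e}(0))\|_{2}=\|(\vec q_{0},\vec q_{0}^{d})-\phi(s_{0},s_{0}^{d})\|_{2}$; I would then set $r:=\bar r/K$ (shrunk slightly to make a later inequality strict).

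The crux, and the step I expect to be the main obstacle, is closing the apparent circularity: the error stays small only because the torque bounds are inactive, yet the torque bounds are inactive only because robust feasibility applies, which presupposes the perturbations—hence the error—are already small. I would resolve this with a maximal-interval/continuity argument. Let $\mathcal{T}$ be the supremum of times up to which $\|(\vec e,\dot{\vec e})\|_{2}<\bar r$. On $[0,\mathcal{T})$ the perturbations satisfy Eq.~\eqref{eq:rbp-constraint}, so by robust feasibility $\pi$ keeps the commanded torque within~\eqref{eq:t-bound}; hence the linear error dynamics and the exponential estimate hold there, giving $\|(\vec e(t),\dot{\vec e}(t))\|_{2}\le K\|(\vec e(0),\dot{\vec e}(0))\|_{2}<\bar r$ whenever $\|(\vec q_{0},\vec q_{0}^{d})-\phi(s_{0},s_{0}^{d})\|_{2}\le r$. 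By continuity this strict inequality cannot be lost, so $\mathcal{T}$ equals the terminal time $T$ guaranteed by robust feasibility; the solution $\vec y=(\vec q,\dot{\vec q},s,\dot s)$ exists on $[0,T]$ (the error ODE is globally solvable, $s(t)$ is the feasible parameterization produced by $\pi$, and $\vec q=\vec p(s)-\vec e$), and $\|(\vec q(t),\dot{\vec q}(t))-\phi(s(t),\dot s(t))\|_{2}=\|(\vec e(t),\dot{\vec e}(t))\|_{2}\le K e^{-\lambda t}\|(\vec q_{0},\vec q_{0}^{d})-\phi(s_{0},s_{0}^{d})\|_{2}$, which is precisely stability together with the exponential bound. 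Minor points to dispatch along the way are that $K$ may exceed $1$ (so $r=\bar r/K$, not $\bar r$), and that $\vec K_{p},\vec K_{d}$ are assumed to render $\ddot{\vec e}+\vec K_{d}\dot{\vec e}+\vec K_{p}\vec e=\vec 0$ exponentially stable, as is standard for computed-torque tracking.
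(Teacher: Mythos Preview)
Your proposal is correct and follows essentially the same route as the paper: pick $R$ from robust feasibility, use the uniform continuity property~\eqref{eq:continuity} to get $\bar r$, note that while torque bounds are inactive the computed-torque law yields the linear error dynamics $\ddot{\vec e}+\vec K_{d}\dot{\vec e}+\vec K_{p}\vec e=\vec 0$, and then close the loop by showing the bounds are never actually violated. The paper phrases the last step as the thought experiment ``suppose we remove the torque bounds'' and then verifies a posteriori that they are respected, whereas you make this rigorous via a maximal-interval argument and also explicitly account for the overshoot constant $K\ge 1$ by taking $r=\bar r/K$; both of these are points the paper glosses over, so your treatment is in fact a bit more careful.
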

\begin{proof}
  Let $R$ denote the bound on the magnitude of the perturbations such
  that $\pi$ is robust feasible. Select $\bar r>0$ such that
  Eq.~\eqref{eq:continuity} holds for $R$ being the scalar bound in
  Eq.~\eqref{eq:2} and for all $(s, s^{d})$. Select
  $(\vec q_{0}, \vec q_{0}^{d})$ such that the norm of the initial
  tracking error is less than $\bar r$.

  Suppose we remove the torque bounds, the computed-torque tracking
  controller is exponentially stable. Thus, the tracking
  error converges exponentially to zero and its norm remains smaller
  than $\bar r$.  Again using Eq.~\eqref{eq:continuity}, it follows
  that the norm of the perturbations is always smaller than $R$.

  Since $\pi$ is robust feasible at $(s_{0}, s_{0}^{d})$ for $R$ being
  the upper bound on the magnitude of the perturbations, the resulting
  parameterization $s(t)$ is feasible. It follows that the torque bounds
  are always satisfied. Therefore, the coupled path tracking system is
  exponentially stable at $(s_{0}, s_{0}^{d})$ according to the
  definition given in Section~\ref{sec:path-tracking}.
\end{proof}

Notice that the definition of robust feasible control laws does not
require a specific $R$. In fact, as seen in the proof of
Proposition~\ref{prop:1}, continuity of the coefficients guarantees
the existence of $\bar r$ such that Eq.~\eqref{eq:rbp-constraint}
holds for any value of $R$. It can be observed that $\bar r$ is the
radius of a ball lying inside the region of attraction of the path
tracking controller.

\subsection{Characterizing robust feasible control laws}
\label{sec:reformulation}

We now provide a characterization of robust feasible control laws.
This development follows and extends the analysis of the Time-Optimal
Path Parameterization problem in~\cite{Pham2017}.

Discretize the interval $[0, 1]$ into $N+1$ stages
\begin{equation*}
  0=: s_{0}, s_1, \dots, s_N:= 1.
\end{equation*}
Define the \emph{state} $x_i$ and the \emph{control} $u_i$ as the
squared velocity at $s_{i}$ and the constant acceleration over
$[s_{i}, s_{i+1}]$. One obtains the transition function
\begin{equation}
  \label{eq:6}
    x_{i+1} = f_i(x_i, u_i) := x_{i} + 2 \Delta_i u_i,
\end{equation}
where $\Delta_i := s_{i+1} - s_i$.  We say that $u_i$ ``steers'' $x_i$
to $x_{i+1}$. See~\cite{Pham2017} for a derivation of Eq.~\eqref{eq:6}.

At each stage, there are $n$ pairs of torque bounds. The $j$-th pair
of torque bounds at stage $i$ is
\begin{equation}
  \label{eq:discrete-torque-bounds}
  \tau_{\min, j}\leq \tau_{ij}=\hat a_{ij} u_{i} + \hat b_{ij} x_{i} + \hat c_{ij} \leq \tau_{\max, j}.
\end{equation}
The coefficients $\hat a_{ij}, \hat b_{ij}, \hat c_{ij}$ are assumed
to vary around known nominal coefficients $a_{ij}, b_{ij}, c_{ij}$:
\begin{align}
  \label{eq:4}
  &\hat a_{ij} = a_{ij} + \Delta_{a, ij}, \;
  \hat b_{ij} = b_{ij} + \Delta_{b, ij}, \;
  \hat c_{ij} = c_{ij} + \Delta_{c, ij},  \\
  \label{eq:5}
  &\|(\Delta_{a, ij}, \Delta_{b, ij}, \Delta_{c, ij})\|_{2} \leq R.
\end{align}
The terms $(\Delta_{a, ij}, \Delta_{b, ij}, \Delta_{c, ij})$ are also
called the perturbations. $R$ is a parameter that be tuned to account
for the magnitude of initial tracking errors. See the discussion at
the end of Section~\ref{sec:robust-path-param} for more details.  A
control is feasible if it satisfies the constraints and is robust
feasible if it satisfies all realizations of the constraints.
Finally, the terminal velocity constraint is transformed to
\[x_{N}\in X_f:=\{x: \sqrt x \in \mathbb{I}_{\rm{end}}\}.\] 

We say that a state is \emph{robust controllable} at stage $i$ if
there exists a sequence of robust feasible controls that steers it to
$\mathcal{X}_{f}$. The set of robust controllable states at stage $i$
is called the \emph{$i$-stage robust controllable set}
$\mathcal{K}_i$.

In this discrete reformulation, the control law becomes a function
$\pi$ that maps the stage index, the state and the constraint
coefficients
$(i, x_{i}, \hat{\vec a}_{i}, \hat{\vec b}_{i}, \hat{\vec c}_{i})$ to
a control.  Similarly, a control law is \emph{robust feasible} at
$(i, x_{i})$ if it steers $x_{i}$ to the terminal set from stage $i$
for any realization of the constraints with feasible controls.

We now give a characterization of robust feasible control laws.
\begin{proposition}
  \label{prop:robust}
  For any state in $\mathcal{K}_i $ and any realization of the
  constraints, there exists at least one feasible control that steers
  that state to $\mathcal{K}_{i+1}$.  If at any stage $i$, a
  control law steers states in $\mathcal{K}_{i}$ to
  $\mathcal{K}_{i+1}$, it is robust feasible at all robust
  controllable states at all stages.
\end{proposition}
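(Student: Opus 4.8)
The plan is to exploit the crucial structural fact that the transition function \eqref{eq:6} does \emph{not} depend on the perturbations: only the torque constraints \eqref{eq:discrete-torque-bounds} do. Hence robustness is an entirely \emph{per-stage} property, and the whole statement reduces to a dynamic-programming bookkeeping over the sets $\mathcal{K}_i$. As a preliminary I would record the terminal identity $\mathcal{K}_N = X_f$: at stage $N$ no control remains, so the only admissible control sequence is the empty one, which ``steers'' $x_N$ to $X_f$ exactly when $x_N\in X_f$.

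For the first claim, take $x_i\in\mathcal{K}_i$. By definition of $\mathcal{K}_i$ there is a sequence $(u_i,u_{i+1},\dots,u_{N-1})$ of robust feasible controls that, along the (deterministic) trajectory $x_{k+1}=f_k(x_k,u_k)$, reaches $X_f$. The head control $u_i$ is robust feasible at $x_i$, so it satisfies \eqref{eq:discrete-torque-bounds} for \emph{every} realization of the coefficients obeying \eqref{eq:4}--\eqref{eq:5}; in particular it is a feasible control for whichever realization is presented. Furthermore the tail $(u_{i+1},\dots,u_{N-1})$ is itself a sequence of robust feasible controls that steers $f_i(x_i,u_i)=x_{i+1}$ to $X_f$, whence $x_{i+1}\in\mathcal{K}_{i+1}$. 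Thus $u_i$ is a feasible control steering $x_i$ into $\mathcal{K}_{i+1}$, which proves the first claim (indeed a single control works simultaneously for all realizations).

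For the second claim, let $\pi$ be a control law that, at \emph{every} stage $i$, maps each $x_i\in\mathcal{K}_i$ together with every admissible realization $(\hat{\vec a}_i,\hat{\vec b}_i,\hat{\vec c}_i)$ to a feasible control $u_i$ with $f_i(x_i,u_i)\in\mathcal{K}_{i+1}$. Fix a stage $i_0$, a state $x_{i_0}\in\mathcal{K}_{i_0}$, and an arbitrary sequence of realizations of the coefficients at stages $i_0,\dots,N-1$. I would then run a forward induction on $k\ge i_0$ to show that the trajectory $\{x_k\}$ produced by $\pi$ satisfies $x_k\in\mathcal{K}_k$ while each applied control is feasible: the base case is the hypothesis $x_{i_0}\in\mathcal{K}_{i_0}$, and the inductive step is precisely the assumed property of $\pi$. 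Pushing the induction to $k=N$ yields $x_N\in\mathcal{K}_N=X_f$, so the generated parameterization reaches the terminal set while every torque bound has been respected along the way; by the discrete-time definition this says $\pi$ is robust feasible at $(i_0,x_{i_0})$. As $i_0$ and $x_{i_0}\in\mathcal{K}_{i_0}$ were arbitrary, $\pi$ is robust feasible at all robust controllable states at all stages.

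The step I expect to require the most care is conceptual rather than computational: one must read ``steers'' for a control law as including feasibility of the applied control \emph{for each realization separately} (since $\pi$ observes only $(i,x_i,\hat{\vec a}_i,\hat{\vec b}_i,\hat{\vec c}_i)$), and one must check that the forward induction never evaluates $\pi$ outside the sets $\mathcal{K}_k$ --- which it does not, exactly because the induction hypothesis keeps $x_k\in\mathcal{K}_k$. The only other delicate point is the boundary identity $\mathcal{K}_N=X_f$ that anchors the induction; everything else is immediate once the determinism of \eqref{eq:6} is used to localize robustness to individual stages.
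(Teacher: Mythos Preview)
Your proposal is correct and follows essentially the same approach as the paper: peel off the head control from the robust-feasible sequence to show $x_{i+1}\in\mathcal{K}_{i+1}$, then run a forward induction through the $\mathcal{K}_k$'s (using $\mathcal{K}_N=X_f$) for the second claim. Your version is more explicit than the paper's --- in particular you spell out the role of the perturbation-independence of the transition map and the forward induction, which the paper compresses to ``it is clear'' --- but the underlying argument is the same.
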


This characterization of robust feasible control laws is only
useful if one can compute the robust controllable sets. To do so, we
first introduce the notion of the robust one-step set.  Given a target
set $\mathbb{I}\subseteq\mathbb{R}$, the $i$-stage \emph{robust
  one-step set} $\mathcal{Q}_{i}(\mathbb{I})$ is the set of states
such that at each state, there is a robust feasible control that
steers it to $\mathbb{I}$.

\begin{proposition}
  \label{prop:2}
  The $i$-stage robust controllable sets, for $i\in[0,\dots,N]$, can be
  computed recursively by
\begin{equation}
  \label{eq:recur}
  \begin{aligned}
    \mathcal{K}_{N} = X_f, \quad  \mathcal{K}_{i} = \mathcal{Q}_i(\mathcal{K}_{i+1}).
  \end{aligned}
\end{equation}
\end{proposition}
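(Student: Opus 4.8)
The plan is to prove the identity by \emph{backward induction} on the stage index $i$, running from $i=N$ down to $i=0$, and showing at each step that the set produced by the recursion~\eqref{eq:recur} coincides with the set of stage-$i$ robust controllable states. First I would make precise the phrase ``a sequence of robust feasible controls that steers $x_i$ to $\mathcal{X}_f$'': it means a finite sequence $u_i,u_{i+1},\dots,u_{N-1}$ such that (i)~each $u_k$ is a robust feasible control at its own stage $k$ (it satisfies every realization of the stage-$k$ constraints~\eqref{eq:discrete-torque-bounds}--\eqref{eq:5}, with $x_k\geq 0$), and (ii)~the iterates generated by the deterministic transition $x_{k+1}=f_k(x_k,u_k)$ of~\eqref{eq:6} satisfy $x_N\in X_f$. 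For the base case $i=N$ there is no further control to apply, so the only admissible sequence is empty, and it ``steers'' $x_N$ to $X_f$ precisely when $x_N\in X_f$; hence the stage-$N$ robust controllable set equals $X_f=\mathcal{K}_N$.

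For the inductive step, assume $\mathcal{K}_{i+1}$ is the set of stage-$(i+1)$ robust controllable states; I would establish $\mathcal{K}_i=\mathcal{Q}_i(\mathcal{K}_{i+1})$ by two inclusions. If $x_i$ is robust controllable at stage $i$, witnessed by $u_i,\dots,u_{N-1}$, then the tail $u_{i+1},\dots,u_{N-1}$ witnesses that $x_{i+1}:=f_i(x_i,u_i)$ is robust controllable at stage $i+1$, so $x_{i+1}\in\mathcal{K}_{i+1}$ by the induction hypothesis; since $u_i$ is itself a robust feasible control steering $x_i$ into $\mathcal{K}_{i+1}$, the definition of the robust one-step set gives $x_i\in\mathcal{Q}_i(\mathcal{K}_{i+1})$. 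Conversely, if $x_i\in\mathcal{Q}_i(\mathcal{K}_{i+1})$, pick a robust feasible control $u_i$ with $f_i(x_i,u_i)\in\mathcal{K}_{i+1}$; by the induction hypothesis that state is robust controllable at stage $i+1$, hence admits a robust feasible sequence to $X_f$, and prepending $u_i$ produces a robust feasible sequence steering $x_i$ to $X_f$, \ie $x_i$ is robust controllable at stage $i$. Chaining the induction from $N$ back to $0$ yields~\eqref{eq:recur}.

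The argument is essentially the standard backward-reachability (dynamic-programming) recursion, so I do not expect a deep obstacle; the one point requiring care is justifying that robust feasibility of a control \emph{sequence} decomposes across stages, i.e.\ that truncating or concatenating robust feasible sequences preserves robust feasibility. This is where the structure of the problem is used: the transition~\eqref{eq:6} does not depend on the perturbation, so after applying $u_i$ the state is the single point $x_i+2\Delta_i u_i$ regardless of the realization, and the perturbation bound~\eqref{eq:5} is imposed independently at each stage; consequently a control is robust feasible ``in isolation'' exactly as it is inside any sequence, and the landing set $\mathcal{K}_{i+1}$ need not be ``robustified'' further. Once this is noted both inclusions are immediate, and no measurability or regularity subtleties arise because the index set is finite and everything is finite-dimensional.
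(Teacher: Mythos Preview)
Your backward-induction argument is correct and is exactly the standard dynamic-programming reachability recursion one would expect here; the key observation you isolate---that the transition~\eqref{eq:6} is perturbation-independent and the bound~\eqref{eq:5} is stagewise, so robust feasibility of a sequence decomposes into robust feasibility of each control---is precisely what makes the two inclusions go through. The paper itself omits the proof of this proposition entirely, referring the reader to~\cite{Pham2017} for ``a similar result'', so there is no in-paper argument to compare against; your proof is the natural one and almost certainly coincides with what the cited reference contains.
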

A proof of this statement is omitted due to space
constraints. Interested readers can refer to~\cite{Pham2017} for the
proof of a similar result. We can now give a proof of
Proposition~\ref{prop:robust} below.

\begin{proof}[Proof of Proposition~\ref{prop:robust}]
  Let $x_{i}$ be a state in $\mathcal{K}_{i}$. Since $x_{i}$ is robust
  controllable, there exists a sequence of controls
  $(u_{i},\dots,u_{N-1})$ that are robust feasible and the resulting
  sequence of states $(x_{i+1},\dots,x_{N})$ satisfies
  $x_{N}\in \mathcal{X}_{N}$. Observe that this implies $x_{i+1}$ is
  robust controllable, and hence, $u_{i}$ is a robust feasible
  control that steer $x_{i}$ to $\mathcal{K}_{i+1}$.

  Consider a control law that steers states in $\mathcal{K}_{i}$ for
  any stage $i$ to $\mathcal{K}_{i+1}$.  It is clear that this control
  law steers any robust controllable states to $\mathcal{K}_{N}$.
  Since $\mathcal{K}_{N} = \mathcal{X}_{f}$, see~\eqref{prop:2}, the
  control law is robust feasible.
\end{proof}

A class of convex sets that can be handled quite efficiently is the
class of \emph{Conic-Quadratic} representable (CQr)
sets~\cite{ben2001lectures}.  A set of vectors $\vec \epsilon$ is CQr
if it is defined by finitely many conic-quadratic constraints
\begin{equation*}
  \left\|\vec D_{i}
  \begin{bmatrix} \vec \epsilon \\ \vec \nu \end{bmatrix}
  - \vec d_{i}\right\|_{2} \leq \vec p_{i}^{\top}
  \begin{bmatrix} \vec \epsilon \\ \vec \nu \end{bmatrix}
  - q_{i}, \; i \in [1,\dots,k].
\end{equation*}

\begin{proposition}
  \label{prop:char-robust-feas}
  If $\mathbb{I}$ is an interval, the set of state and robust feasible
  control pairs $(x, u)$ that satisfies
  $x + 2\Delta_{i}u\in \mathbb{I}$ is a CQr. Furthermore,
  $\mathcal{Q}_{i}(\mathbb{I})$ is an interval.
\end{proposition}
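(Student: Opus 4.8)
The plan is to absorb the perturbations into a single worst case, recognize the resulting constraints as conic-quadratic, and then read the interval claim off from convexity.

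First I would eliminate the perturbation quantifier from the torque bounds~\eqref{eq:discrete-torque-bounds}. Writing $\tau_{ij}=(a_{ij}u+b_{ij}x+c_{ij})+(\Delta_{a,ij},\Delta_{b,ij},\Delta_{c,ij})\cdot(u,x,1)$ and using that the perturbation vector ranges over the Euclidean ball of radius $R$ from~\eqref{eq:5}, Cauchy--Schwarz shows the extreme values of the dotted term, over all admissible realizations, are exactly $\pm R\,\lVert(u,x,1)\rVert_2$, and both are attained. Hence $u$ is robust feasible at $x$ (at stage $i$) if and only if, for every $j\in[1,\dots,n]$,
\[
  R\,\lVert(u,x,1)\rVert_2 \le a_{ij}u+b_{ij}x+c_{ij}-\tau_{\min,j}
  \quad\text{and}\quad
  R\,\lVert(u,x,1)\rVert_2 \le \tau_{\max,j}-a_{ij}u-b_{ij}x-c_{ij}.
\]

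Next I would match these $2n$ inequalities to the conic-quadratic template displayed just before the proposition: in each one the left-hand side is the $\ell_2$-norm of an affine function of the variables $\vec\epsilon=(x,u)$ --- namely $(x,u)\mapsto R(u,x,1)$, needing no auxiliary variables $\vec\nu$ --- and the right-hand side is affine in $(x,u)$. The transition requirement $x+2\Delta_i u\in\mathbb I$ adds at most two further affine inequalities (an affine inequality being the degenerate conic-quadratic constraint with zero left-hand side), one of which is simply dropped when $\mathbb I$ is a half-line. Thus the set $S_i(\mathbb I)$ of pairs $(x,u)$ that are simultaneously robust feasible and steer $x$ into $\mathbb I$ is the solution set of finitely many conic-quadratic constraints, i.e.\ it is CQr, which is the first claim.

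Finally, each conic-quadratic constraint defines a convex set (the preimage of the second-order cone under an affine map), so $S_i(\mathbb I)$, a finite intersection of such sets, is convex. Since $\mathcal Q_i(\mathbb I)=\{x:\exists u,\ (x,u)\in S_i(\mathbb I)\}$ is the image of $S_i(\mathbb I)$ under the linear projection $(x,u)\mapsto x$, it is a convex subset of $\mathbb R$, hence an interval (allowing the empty set, a point, or an unbounded interval). The main thing to get right is the Cauchy--Schwarz step --- one must check that the upper and lower torque constraints are driven by perturbations in the same radius-$R$ ball, so that the ``for all realizations'' quantifier collapses to a single worst case --- together with the routine bookkeeping of recasting the two-sided bounds into the paper's one-sided CQr form; neither step is deep.
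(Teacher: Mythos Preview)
Your argument is correct and follows essentially the same route as the paper: eliminate the perturbation via Cauchy--Schwarz to obtain the worst-case bounds $\pm R\lVert(u,x,1)\rVert_2$, recognize the resulting $2n$ inequalities plus the two linear transition constraints as conic-quadratic, and conclude. Your justification of the interval claim (convexity of the CQr set, then linear projection onto the $x$-axis) is actually more explicit than the paper's, which simply calls it ``a simple corollary.''
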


Indeed, from Eq.~\eqref{eq:discrete-torque-bounds} and
Eq.~\eqref{eq:4}, the $j$-th joint torque is given by
\begin{equation*}
  \tau_{ij}=
  \begin{bmatrix}
   a_{ij} & b_{ij}  & c_{ij} 
  \end{bmatrix}
  \begin{bmatrix}
    u_{i} \\ x_{i} \\ 1
  \end{bmatrix}
  + 
  \begin{bmatrix}
    \Delta_{a, ij} & \Delta_{b, ij} & \Delta_{c, ij}
  \end{bmatrix}
  \begin{bmatrix}
    u_{i} \\ x_{i} \\ 1
  \end{bmatrix}.
\end{equation*}
Since the norm of the perturbation is bounded, see Eq.~\eqref{eq:5},
one obtains the inequality
\begin{equation}
  \label{eq:1}
  \tau_{ij} \leq 
  \begin{bmatrix}
   a_{ij} & b_{ij}  & c_{ij} 
  \end{bmatrix}
  \begin{bmatrix}
    u_{i} \\ x_{i} \\ 1
  \end{bmatrix}
  + 
  R
  \left\|
  \begin{bmatrix}
    u_{i} \\ x_{i} \\ 1
  \end{bmatrix}
  \right\|_{2}.
\end{equation}
It is clear that if and only if the right-hand side is not greater
than $\tau_{\max, j}$, the pair $(u_{i}, x_{i})$ satisfies all
realizations of this constraint. One obtains the conic-quadratic
constraint
\begin{equation}
  \label{eq:2}
  R \left\|
    \begin{bmatrix}
      1 & 0 \\0 & 1 \\ 0& 0
    \end{bmatrix}
    \begin{bmatrix}
      u_{i} \\ x_{i}
    \end{bmatrix}
    +
    \begin{bmatrix}
      0 \\ 0 \\ 1
    \end{bmatrix}
  \right\|_{2}
  \leq -
  \begin{bmatrix}
    a_{ij} & b_{ij}
  \end{bmatrix}
  \begin{bmatrix}
    u_{i} \\ x_{i}
  \end{bmatrix}
  - c_{i} + \tau_{\max, j}.
\end{equation}
Note that the lower bound can be handled in a similar way. Instead
of finding the upper bound of $\tau_{ij}$, one derives the lower bound
\begin{equation}
  \label{eq:3}
  \tau_{ij} \geq 
  \begin{bmatrix}
   a_{ij} & b_{ij}  & c_{ij} 
  \end{bmatrix}
  \begin{bmatrix}
    u_{i} \\ x_{i} \\ 1
  \end{bmatrix}
  - 
  R
  \left\|
  \begin{bmatrix}
    u_{i} \\ x_{i} \\ 1
  \end{bmatrix}
  \right\|_{2}.
\end{equation}
By requiring the right-hand side to be greater than or equal to
$\tau_{\min, j}$, one obtains another conic-quadratic constraint.

Finally, if $\mathbb{I}$ is an interval, the constraint
$x + 2 \Delta_{i}u\in \mathbb{I}$ is equivalent to two linear
inequalities, which are clearly CQr. $\mathcal{Q}_{i}(\mathbb{I})$
being an interval is a simple corollary.

From Proposition~\ref{prop:char-robust-feas}, one can formulate a pair
of conic-quadratic optimization programs to compute the robust
one-step set for any given target set.  One program maximizes $x$
while one program minimizes.  Computing the robust controllable sets
is then possible with Proposition~\ref{prop:2}.

\subsection{A control law for time-optimal path tracking}
\label{sec:control-law-time}

Proposition~\ref{prop:robust} can be used to identify a class of
control laws that are robust feasible, which, by
Proposition~\ref{prop:1} are exponentially stable. What is then
the control law that realizes the shortest traversal time in this
class? We give the following conjecture.

\begin{conjecture}
  In the class of control laws that for all $i$ steers states in
  $\mathcal{K}_{i}$ to $\mathcal{K}_{i+1}$, the control law that
  always chooses \emph{the greatest feasible controls} is
  time-optimal.
\end{conjecture}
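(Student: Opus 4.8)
The plan is to reduce the conjecture to a stagewise domination property of the squared-velocity trajectory. First I would put the per-segment time in closed form: from the constant-acceleration transition $x_{i+1}=x_i+2\Delta_i u_i$ one gets $\Delta t_i=\int_{s_i}^{s_{i+1}}\! ds/\sqrt{x_i+2u_i(s-s_i)}=2\Delta_i/(\sqrt{x_i}+\sqrt{x_{i+1}})$ (with the obvious limiting value when $u_i=0$), so the traversal time $T=\sum_{i=0}^{N-1}\Delta t_i$ is a strictly decreasing function of each $x_i$. It therefore suffices to prove: for every admissible realization of the online coefficients $(\hat{\vec a}_i,\hat{\vec b}_i,\hat{\vec c}_i)_i$, the trajectory $(x_i^{\star})$ produced by the greatest-feasible-control law dominates, stage by stage, the trajectory $(x_i^{\pi})$ of any control law $\pi$ that for all $i$ steers $\mathcal K_i$ into $\mathcal K_{i+1}$ — i.e.\ $x_i^{\star}\ge x_i^{\pi}$ for all $i$ — whence $T^{\star}\le T^{\pi}$.

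I would prove this by induction on $i$. The base case holds since $x_0$ is the prescribed initial state. For the step, introduce the best-reachable-successor map $g_i(x):=\max\{x'\in\mathcal K_{i+1}:\ \text{the control }(x'-x)/(2\Delta_i)\ \text{meets the realized torque bounds at stage }i\}$. Because $x\in\mathcal K_i=\mathcal Q_i(\mathcal K_{i+1})$ (Proposition~\ref{prop:2}), and both the torque-bound set in $u$ and $\mathcal K_{i+1}$ are closed intervals, $g_i$ is well defined on $\mathcal K_i$ with its value attained; in particular the greatest-feasible-control law lies in the class. By construction $x_{i+1}^{\star}=g_i(x_i^{\star})$, whereas any admissible $\pi$ only satisfies $x_{i+1}^{\pi}\le g_i(x_i^{\pi})$. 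Hence, provided $g_i$ is non-decreasing on $\mathcal K_i$ and $x_i^{\star}\ge x_i^{\pi}$, we get $x_{i+1}^{\star}=g_i(x_i^{\star})\ge g_i(x_i^{\pi})\ge x_{i+1}^{\pi}$, closing the induction.

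The crux is thus to show that $g_i$ is non-decreasing on $\mathcal K_i$. Write $g_i(x)=\min(\overline x_{i+1},\,h_i(x))$, where $\overline x_{i+1}:=\max\mathcal K_{i+1}$ and $h_i(x):=x+2\Delta_i\,\overline u_i(x)$, with $\overline u_i(x)$ the largest $u$ for which $(x,u)$ satisfies every realized conic torque constraint at stage $i$. Since a minimum with a constant preserves monotonicity, it suffices that $h_i$ be non-decreasing on $\mathcal K_i$. Here one can exploit that each realized robust torque constraint cuts out a convex region of $(x,u)$-space — it is a sublevel set of the convex function appearing in Proposition~\ref{prop:char-robust-feas} — so their intersection is convex, and $\overline u_i$, a partial maximization of the coordinate $u$ over that convex set, is concave; hence $h_i$ is concave, and a concave function on an interval is non-decreasing iff its (left) slope at the right endpoint $\overline x_i:=\max\mathcal K_i$ is non-negative.

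The main obstacle is the sign of that slope. It equals $1+2\Delta_i\,\partial_x\overline u_i(\overline x_i^-)$, and $\partial_x\overline u_i$ is dictated by the active torque constraint there, behaving like $-b_{ij}/a_{ij}$ up to robust corrections; the slope is guaranteed non-negative only under a regularity/fineness condition such as $2\Delta_i|\partial_x\overline u_i|<1$ at every stage. This holds automatically in the continuous limit (fine $\Delta_i$, small $R$, a regular path with $\vec p'\neq\vec 0$), under which the whole argument goes through; but for coarse discretizations $g_i$ can decrease on the upper part of $\mathcal K_i$, and one would then have either to adopt such a standing assumption or to replace the stagewise comparison by a global argument that balances gains and losses across segments through the explicit form of $T$ — which is presumably why the statement is still only a conjecture. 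Finally, at the terminal stage $x_N$ is free in $X_f$ rather than pinned; the greedy law then picks the largest torque-reachable element of $X_f$, and since $T$ is still decreasing in $x_N$ the same comparison applies.
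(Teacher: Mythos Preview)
The paper does not prove this statement: it is explicitly left as a conjecture, with the authors writing ``We currently do not have a proof of this conjecture'' and offering only numerical evidence (comparison of traversal time against the TOPP optimum in Section~\ref{sec:experimental-results}). So there is no paper proof to compare against; any argument you give goes beyond what the authors establish.

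Your reduction is sound and well chosen: the closed form $\Delta t_i = 2\Delta_i/(\sqrt{x_i}+\sqrt{x_{i+1}})$ makes $T$ strictly decreasing in every $x_i$, so stagewise domination $x_i^\star \ge x_i^\pi$ would indeed yield $T^\star \le T^\pi$; and the induction correctly isolates monotonicity of $g_i$ as the only missing ingredient. Your concavity argument for $h_i$ is also correct (the realized per-stage feasible set in $(x,u)$ is convex whether one uses the linear realized constraints or the conic robust ones), and the observation that a concave $h_i$ is non-decreasing on $\mathcal K_i$ iff its left-slope at $\overline x_i$ is non-negative is valid.

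However, the gap you flag is genuine and not merely technical. Nothing in the problem data forces $1 + 2\Delta_i\,\partial_x\overline u_i(\overline x_i^-)\ge 0$: a single active constraint with $b_{ij}/a_{ij}$ large relative to $1/(2\Delta_i)$ already makes $h_i$ strictly decreasing near $\overline x_i$, so from a \emph{higher} $x_i$ the best reachable $x_{i+1}\in\mathcal K_{i+1}$ is strictly \emph{lower}. In that regime the induction breaks, and stagewise domination can fail even though the greedy law may still be time-optimal via cross-stage compensation in $T$. Your proposed fix---a standing fineness/regularity hypothesis of the form $2\Delta_i|\partial_x\overline u_i|<1$---does rescue the argument and gives a clean sufficient condition, but it is an additional assumption not present in the conjecture as stated. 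Absent that, a proof would need a different, non-stagewise comparison (e.g.\ working with the value function $V_i(x)=\min T$ from stage $i$ and showing it is non-increasing in $x$, which faces the same monotonicity obstruction). In short: your diagnosis of \emph{why} this remains a conjecture is exactly right.
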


We current do not have a proof of this conjecture. Regardless, in our
experiments, the conjecture is verified by comparing the traversal
time with the duration of the time-optimal path parameterization.

\section{Experimental results}
\label{sec:experimental-results}

We simulated a 6-axis robotic arm and controlled it to track a
geometric path $\vec p(s)_{s\in [0, 1]}$ with zero terminal velocity constraint.
The torques bounds are
\begin{equation*}
  \vec \tau_{\max} =   -\vec \tau_{\min} = [120., 280., 280., 120., 80., 80.] (\SI{}{Nm}).
\end{equation*}
Fig.~\ref{fig:motion} visualizes the swinging motion.  Initially the
robot was at rest and had an initial joint positions error with magnitude
$\SI{0.1}{rad}$.  Forward dynamic computations were performed using
OpenRAVE~\cite{diankov_thesis} and the \texttt{dopri5} solver. We
sampled joint torques at sample time $\SI{1}{ms}$.

\begin{figure*}
  \centering
  \begin{tikzpicture}
    \node[anchor=south west,inner sep=0] (image) at (0,0)    {\includegraphics[trim={10cm 5cm 10cm 5cm},clip,width=2.3cm]{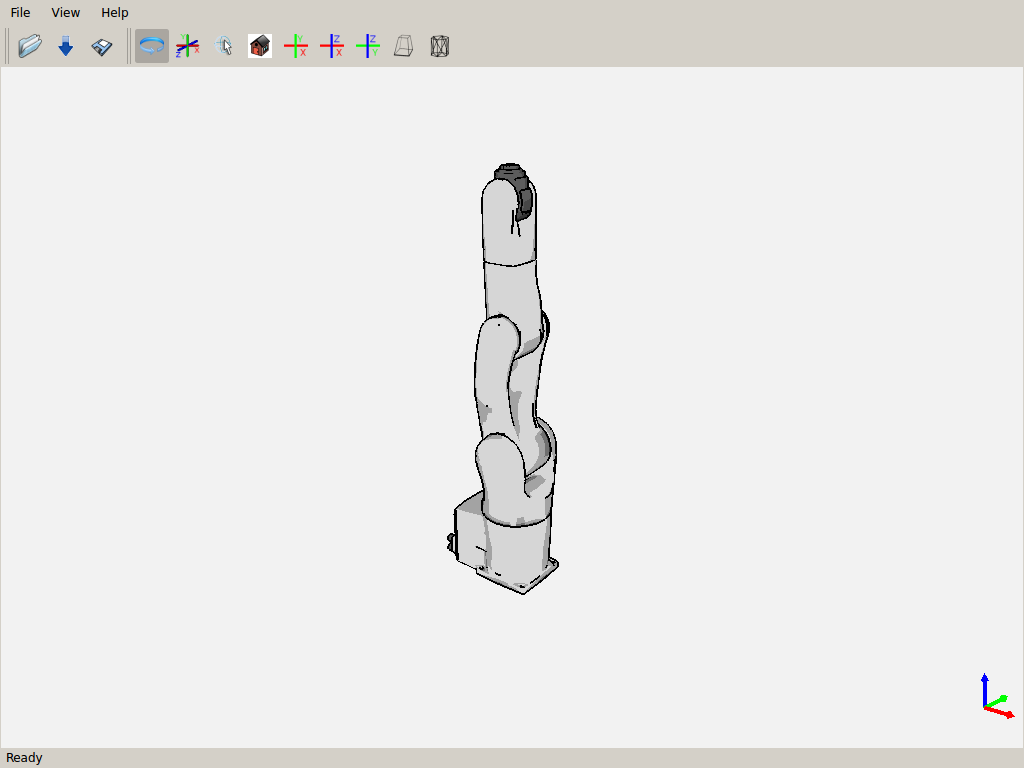}};
    \node[anchor=south west,inner sep=0] (image) at (3cm,0)  {\includegraphics[trim={10cm 5cm 10cm 5cm},clip,width=2.3cm]{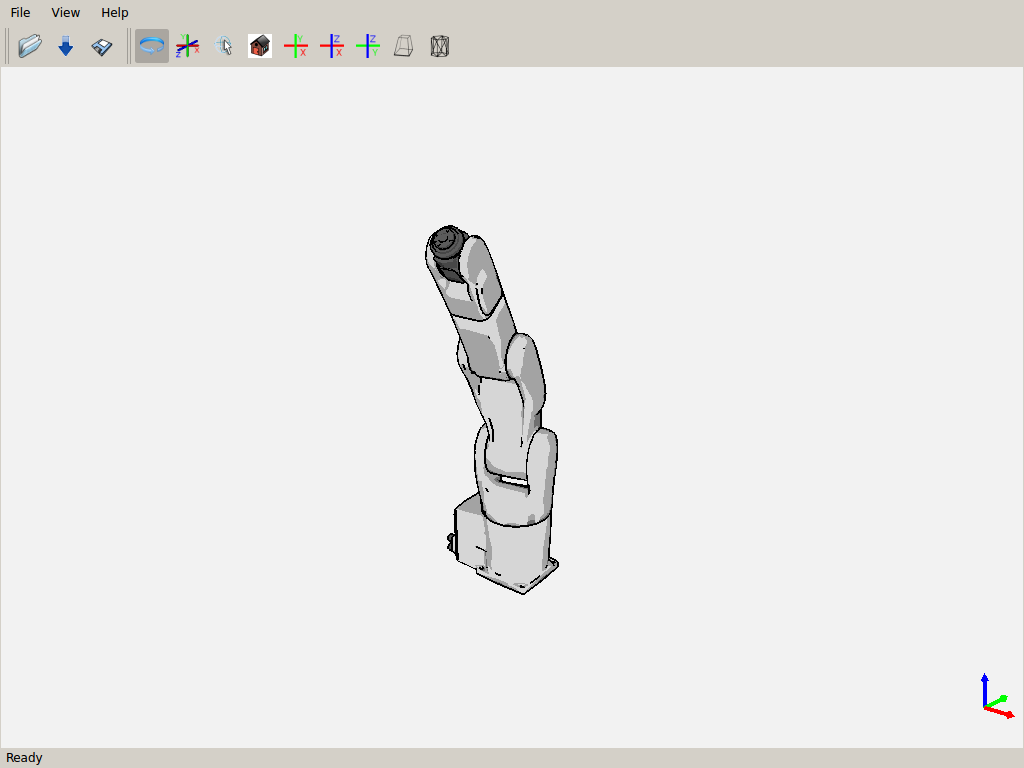}};
    \node[anchor=south west,inner sep=0] (image) at (6cm,0)  {\includegraphics[trim={10cm 5cm 10cm 5cm},clip,width=2.3cm]{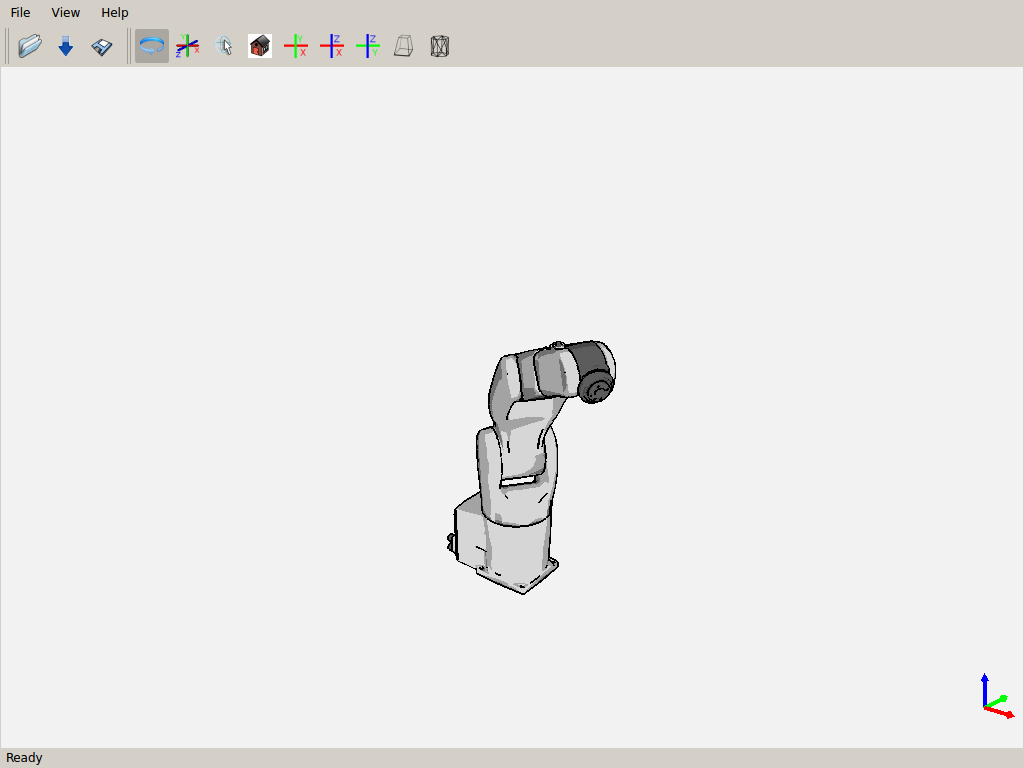}};
    \node[anchor=south west,inner sep=0] (image) at (9cm,0)  {\includegraphics[trim={10cm 5cm 10cm 5cm},clip,width=2.3cm]{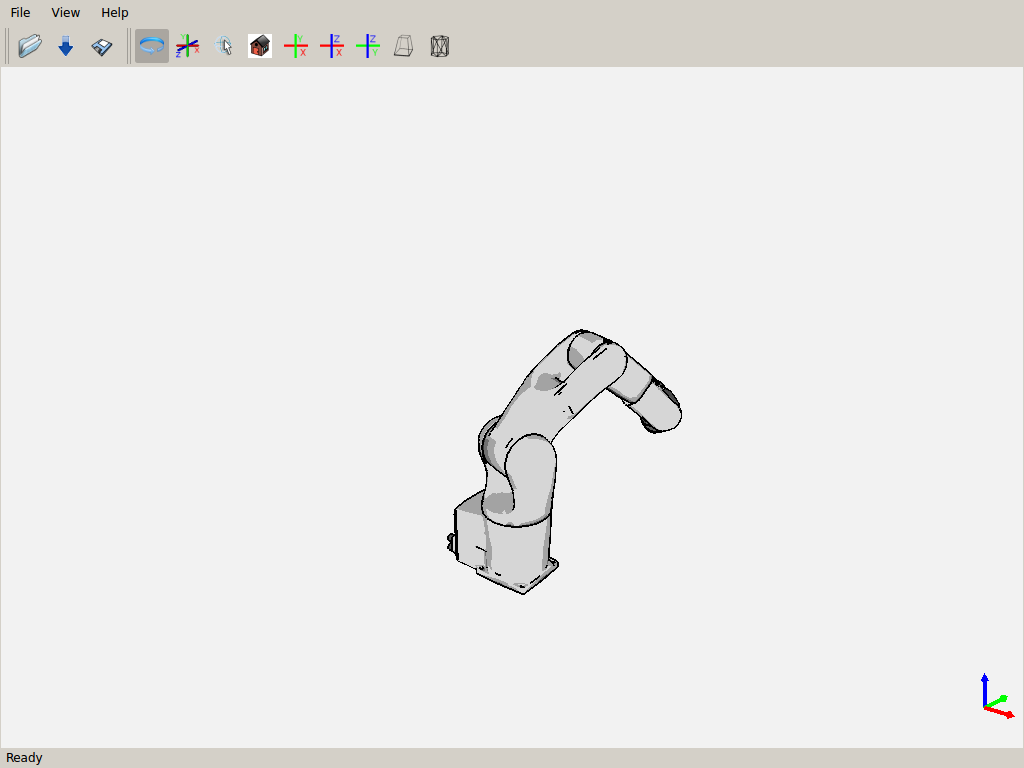}};
    \node[anchor=south west,inner sep=0] (image) at (12cm,0) {\includegraphics[trim={10cm 5cm 10cm 5cm},clip,width=2.3cm]{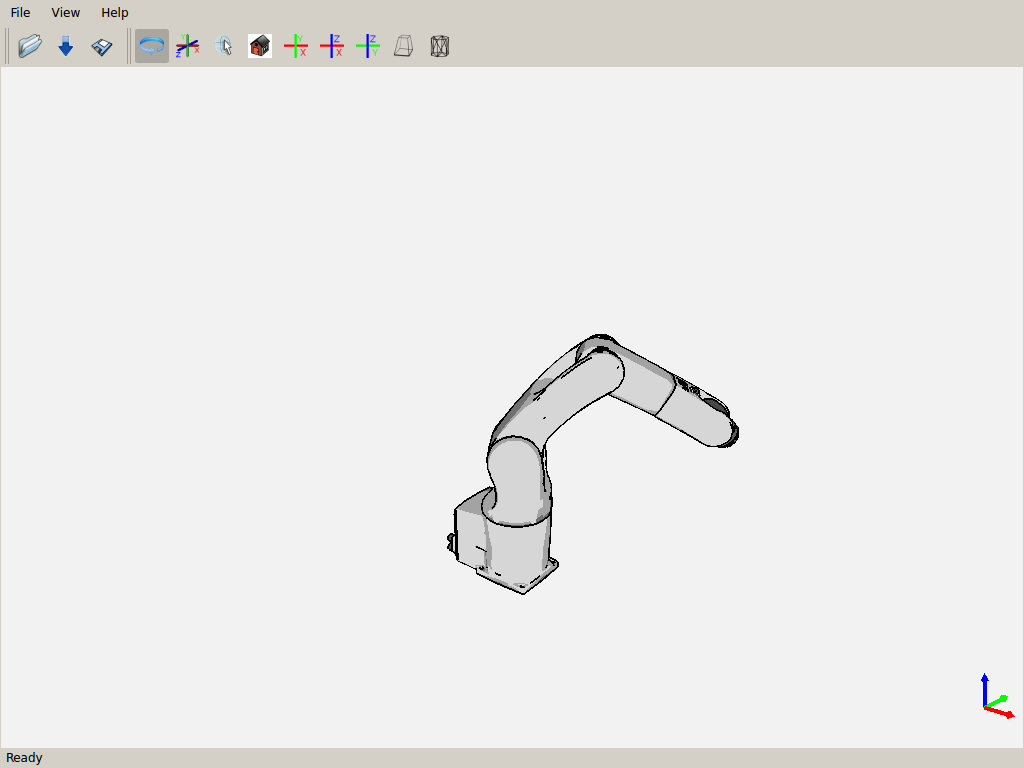}};
    \node[anchor=south west,inner sep=0] (image) at (15cm,0) {\includegraphics[trim={10cm 5cm 10cm 5cm},clip,width=2.3cm]{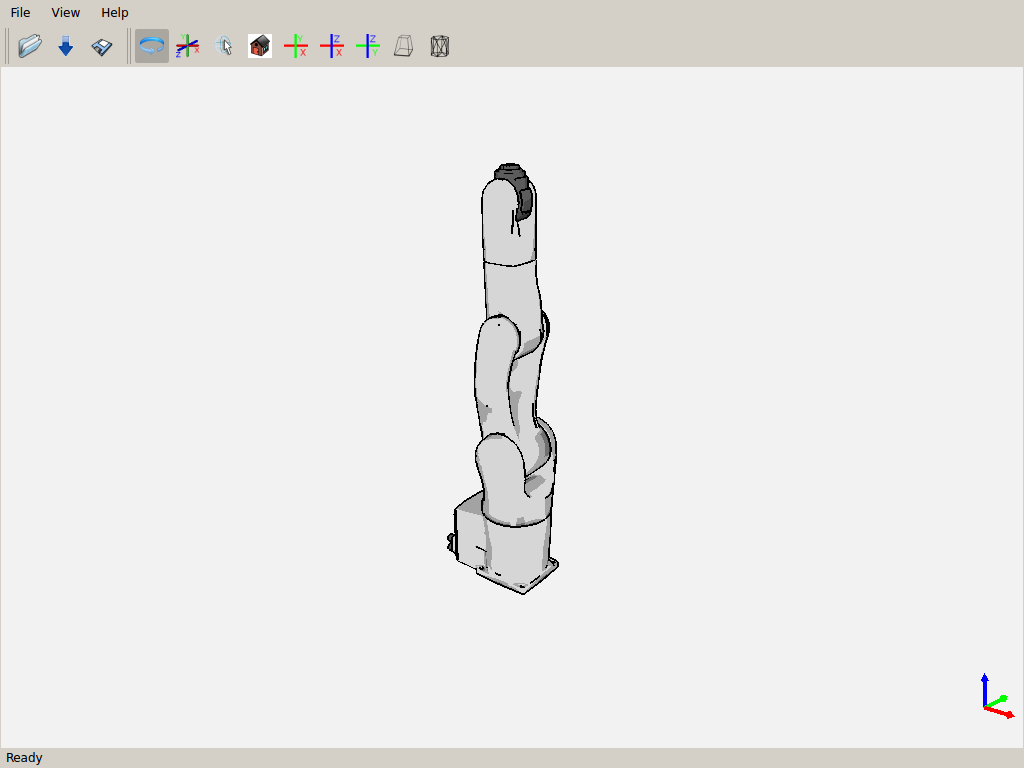}};
    \node[axes label, anchor=south, fill=white, text width=1.2cm, draw] at (1.5cm, 0) {$s=0$};
    \node[axes label, anchor=south, fill=white, text width=1.2cm, draw] at (4.5cm, 0) {$s=0.16$};
    \node[axes label, anchor=south, fill=white, text width=1.2cm, draw] at (7.5cm, 0) {$s=0.3$};
    \node[axes label, anchor=south, fill=white, text width=1.2cm, draw] at (10.5cm, 0) {$s=0.55$};
    \node[axes label, anchor=south, fill=white, text width=1.2cm, draw] at (13.5cm, 0) {$s=0.85$};
    \node[axes label, anchor=south, fill=white, text width=1.2cm, draw] at (16.5cm, 0) {$s=1.0$};
  \end{tikzpicture}
  \caption{\label{fig:motion} The swinging motion used in the
    experiment.}
\end{figure*}

We implemented the time-optimal path tracking controller conjectured
in Section~\ref{sec:control-law-time}, with the bounds on the norm of
the perturbations $R$ set uniformly to $0.5$. The number of
discretization step $N$ was set to $100$.

Computing the robust controllable sets excluding computations of the
coefficients took $\SI{120}{ms}$. We solved the conic-quadratic
programs using the Python interface of
\texttt{ECOS}~\cite{bib:Domahidi2013ecos}. Note that computing the
coefficients involves evaluating the inverse dynamics twice per
stage~\cite{pham2014general}, which had a total running time of
$\SI{40}{ms}$.  Online computations of the controls $(\vec \tau, u)$
took $\SI{0.50}{ms}$ per time step. All computations were done on a
single core of a laptop at $\SI{3.800}{GHz}$.

We compared our controller, called the Time-Optimal Path Tracking
controller (TOPT), with the Online Scaling controller (OS)
in~\cite{dahl1994path} and the Computed-Torque Trajectory Tracking
controller (TT) in~\cite{spong2008robot}. The OS controller tracked
the time-optimal path parameterization of the given geometric path,
while the TT controller tracked the time-optimal \emph{trajectory}.

\begin{table}[htp]
  \caption{Tracking duration and max position errors}
  \label{tab:traj-time}
  \centering
  \begin{tabular}{lrrrr}
    \toprule
           & TOPT & OS & TT \\
    \midrule
    Max pos. err. $(\SI{}{rad})$  & $0.10$ & $0.491$  & $0.493$     \\
    Tracking dur. $(\SI{}{sec})$      & 1.021   & 1.017    & 1.017  \\
    \bottomrule
  \end{tabular}
\end{table}



\begin{figure}[ht]
  \centering
  \begin{tikzpicture}
    \node[anchor=south west,inner sep=0] (imagebottom) at (0,-4.5)
    {\includegraphics[]{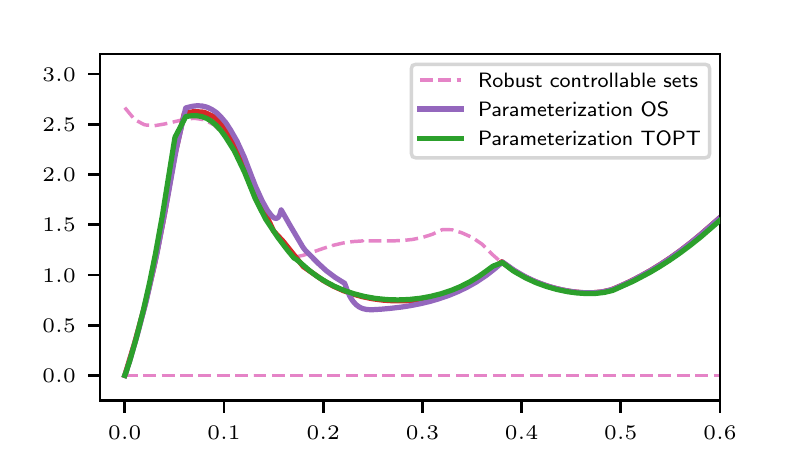}};

    \node[anchor=south west,inner sep=0] (image) at (0,0)
    {\includegraphics[]{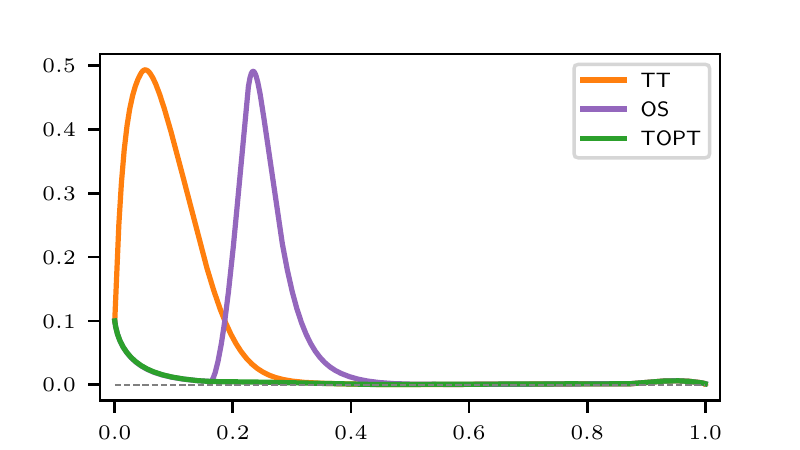}};
    \begin{scope}[x={(image.south east)},y={(image.north west)}]
      \node[axes label, anchor=north] at (0.5, 0.025) {Path position};
      \node[anchor=south] at (0.14, 0.875) {\textbf{A}};

      \node[axes label, anchor=south, rotate=90, text width=5cm]
      at (0.06, 0.5) {Norm of position errors $(\SI{}{rad})$};
    \end{scope}
    \begin{scope}[shift={(imagebottom.south west)},
                  x={(imagebottom.south east)},
                  y={(imagebottom.north west)}]
      \node[axes label, anchor=north] at (0.5, 0.025) {Path position};
      \node[anchor=south] at (0.14, 0.875) {\textbf{B}};
      \node[axes label, anchor=south, rotate=90, text width=5cm] at (0.06, 0.5)
          {Squared path velocity};
    \end{scope}
   
  \end{tikzpicture}
  \caption{\label{fig:comparetraj} \textbf{A}: Norms of joint
    position errors of three controllers: TOPT, OS and
    TT. \textbf{B}: The path position-squared path velocity space
    showing parameterizations (solid lines) generated online by the
    TOPT and OS controllers, the robust controllable sets (dashed
    lines).  }
\end{figure}

We observe that the TT controller was incapable of handling the
initial position error: Fig.~\ref{fig:comparetraj}A shows that position
errors increased quickly reaching a maximum norm of $\SI{0.49}{rad}$
before stabilizing.  

The OS controller was only able to regulate position errors during the
initial segment. At $s\approx 0.18$, position errors increased
sharply. See Fig.~\ref{fig:comparetraj}A. We note that at this
instance, the OS controller was not able to find any \emph{feasible}
path acceleration.  This event can be observed in Fig.~\ref{fig:comparetraj}B 
as a sharp spike on the generated parameterization.

The TOPT controller did not show any of the above problems. The joint
positions converged quickly to zero. The total tracking duration of
the TOPT controller was slightly higher than the \emph{optimal duration}:
about 1\% longer. See
Table~\ref{tab:traj-time} for the durations.

Finally, we observed that the parameterization generated by the TOPT
controller differed from the parameterization generated by the OS
controller mostly \emph{during decelerating path segments}.  See for
instance the path position interval $s \in [0.05, 0.15]$ in
Fig.~\ref{fig:comparetraj}. Specifically, it can be seen that the TOPT
controller ``slowed down'' in order to stay within the robust
controllable sets. This helped the TOPT controller avoids the
infeasibility at $s=0.18$.

\section{Conclusion}
\label{sec:conclusion}

In this paper, we considered the Time-Optimal Path Tracking problem:
given a geometric path, find the control strategy to traverse the path
time-optimally while regulating tracking errors.  We have introduced
the Time-Optimal Path Tracking controller and shown that the
controller outperforms existing methods.  The key innovation is the
use of robust controllable sets, which intuitively define the sets of
``safe'' path parameterizations that can be tracked while accounting
for possible variations of the coefficients. The technique used in
this paper is Reachability Analysis, a new method for analyzing path
parameterization problems~\cite{Pham2017}.

Several matters have been left for future investigations. Important
questions include how to evaluate and optimize the region of
attraction of path tracking controllers.  Another direction is
extending the approach to handle industrial manipulators with position
or velocity interfaces and to account for higher-ordered constraints
such as joint jerk bounds.

\subsection*{Acknowledgment}

This work was partially supported by grant ATMRI:2014-R6-PHAM (awarded
by NTU and the Civil Aviation Authority of Singapore) and by the
Medium-Sized Centre funding scheme (awarded by the National Research
Foundation, Prime Minister's Office, Singapore).

\bibliographystyle{IEEEtran}
\bibliography{library}

\end{document}